\newcommand{\vv}{\ensuremath{\mathbf v}}
\newcommand{\av}{\ensuremath{\mathbf a}}
\newcommand{\pv}{\ensuremath{\mathbf p}}
\begin{document}

\title{Efficient Multi-Agent Global Navigation Using Interpolating Bridges}
\titlerunning{Efficient Multi-Agent Global Navigation Using Interpolating Bridges}

\author{Liang He\inst{1} and Jia Pan\inst{2} and Dinesh Manocha\inst{1}}
\authorrunning{Liang He et al.} 
%
\tocauthor{Liang He and Jia Pan and Dinesh Manocha}
\institute{UNC Chapel Hill \and City University of Hong Kong}

\maketitle

\begin{abstract}
We present a novel approach for collision-free global navigation for continuous-time multi-agent systems with general linear dynamics. Our approach is general and can be used to perform collision-free navigation in 2D and 3D workspaces with narrow passages and crowded regions.
As part of pre-computation, we compute multiple bridges in the narrow or tight regions in the workspace using kinodynamic RRT algorithms. Our bridge has certain geometric characteristics, that enable us to calculate a collision-free trajectory for each agent using simple interpolation at runtime. Moreover, we combine interpolated bridge trajectories with local multi-agent navigation algorithms to compute global collision-free paths for each agent.
The overall approach combines the performance benefits of coupled multi-agent algorithms with the precomputed trajectories of the bridges to handle challenging scenarios. In practice, our approach can handle tens to hundreds of agents in real-time on a single CPU core in 2D and 3D workspaces.
\end{abstract}

\section{Introduction}
Multi-agent navigation algorithms are widely used for motion planning among static and dynamic obstacles. The underlying applications include cooperative surveillance, sensor networks, swarm navigation, and simulation of animated characters or human crowds in games and virtual worlds. One key problem in multi-agent navigation is the computation of collision-free trajectories for agents, given their own initial and goal positions.

This problem has been studied extensively in robotics, AI, and computer animation. At a broad level, prior approaches can be classified into  \emph{coupled} or \emph{decoupled} planners.
A coupled planner aggregates all the individual robots into one large composite system, and leverages classical motion planners (e.g., sampling-based planners) to compute collision-free trajectories for all agents. On the other hand, decoupled planners computes a trajectory for each robot individually for a short horizon (e.g., a few time-steps), and then performs a velocity coordination to resolve the collision between the local trajectories of all agents. Different techniques have been proposed to compute local collision-free paths or schedule their motion.

Coupled planners are (probabilistically) complete in theory and thus can provide rigorous guarantees about collision avoidance between the agents and the obstacles. However, as the number of agents in the scene increases, the resulting dimension of the system's configuration space increases linearly. As a result, current coupled algorithms are only practical for a few agents. Decoupled planners are generally faster because fewer degrees of freedom are taken into account at a time. Unfortunately, they are usually not complete because, the velocity coordination may not resolve all collisions, and the agents may get stuck in crowded scenarios or block each other (e.g., due to the inevitable collision states~\cite{Fraichard:2003:ICS,Fraichard:2007:SPM}). Thus, in challenging scenarios with narrow passages, a decoupled planner may take a long time or even be unable to find a solution when one exists. In addition, the velocity coordination can be slow in crowded environments, where all agents have to move toward their goals at very small steps.

{\noindent \bf Main Results:} In this paper, we present a novel method to accelerate the performance of decoupled multi-agent planners in crowded or challenging environments. Our approach is limited to scenarios with static obstacles or dynamic obstacles whose trajectories are kn
own beforehand. The key idea is to compute bridges in the narrow passages or challenging areas of the workspace, which are collision-free regions of the workspace and have certain geometric navigation characteristics. As an agent approaches the bridge, we use the precomputed local trajectories associated with that bridge to guide the agent toward the goal position.
We also present an efficient scheduling scheme which enables multiple agents to share a single bridge efficiently.
The overall trajectory of each agent is calculated by using an optimal trajectory generation algorithm~\cite{webb2013kinodynamic} along the interpolated path in the bridge, which combines the efficiency of the decoupled methods with the completeness of the interpolating bridges.
\nocite{Berenson:ICRA:RPP:2012}

A novel component of our approach is the computation of interpolating bridges in 2D and 3D workspaces. Each bridge lies in the collision-free space, and its boundaries are calculated using kinodynamic RRT algorithms. Our approach guarantees that when an agent enters a bridge with a velocity satisfying suitable conditions (as later will be discussed in detail), it can always compute a collision-free trajectory that lies within the bridge. Furthermore, we present an inter-trajectory scheduling scheme for multiple agents sharing a bridge for navigation that has a small runtime overhead. We present efficient algorithms to compute these bridges in 2D and 3D workspaces, to generate trajectories within bridges at runtime, and to schedule agents sharing the bridges for efficient global navigation. We highlight the performance of our method in several challenging 2D and 3D workspsaces with narrow passages. We demonstrate its performance by comparing it with prior multi-agent navigation approaches based on local collision avoidance. In practice, our method can handle about 50 agents in real-time on a single CPU core for both 2D and 3D scenarios.

The rest of the paper is organized as follows. We give a brief survey of prior work in Section~\ref{sec:related}. We introduce our notation and provide an overview of our approach in Section~\ref{sec:overview}. We describe the bridge computation and global multi-agent navigation algorithm in Section~\ref{sec:bridgebuild} and Section~\ref{sec:navigation}, respectively. We analyze our method's properties in Section~\ref{sec:analysis}, and finally  highlight its performance in Section~\ref{sec:experiment}.

\section{Related Work}
\label{sec:related}

There has been extensive work on multi-agent motion planning. This work includes many reactive methods such as RVO~\cite{Jur:2011:RVO}, HRVO~\cite{Snape:2014:SCN}, and their variants. These techniques compute a feasible movement for each agent such that it can avoid other agents and obstacles in a short time horizon. However, they cannot provide mathematical guarantees about whether or not agents can always find collision-free trajectories. In particular, they may not be able to avoid the inevitable collision states (ICS)~\cite{Fraichard:2003:ICS,Fraichard:2007:SPM,John:1985:MPP,Bekris:2012:SDM} in the configuration space, due to robots' dynamical constraints or obstacles in the scenario. Some methods~\cite{Bekris:2012:SDM,he2013meso,he2016proxemic} provide partial solutions to these problems, but they still cannot guarantee avoidance of all ICS in the long horizon while working in a crowd scenario with narrow passages. Actually, even for a scenario without static obstacles, it is still difficult to achieve robust collision-avoidance coordination when there is a large number of agents~\cite{solovey2015motion,solovey2014hardness}.

The simplest solution to the difficulty of inevitable collision states is to design suitable protocols for multi-robot coordination/interaction~\cite{Dresner:2008:MAA,Alami:1998:MRC}. Some other approaches precompute roadmaps or corridors in the entire workspace to achieve high-quality path planning~\cite{geraerts2008using,Wein:2008:PHQ}. However, these methods are not complete and may provide sub-optimal trajectories. Our method also leverages pre-computed bridges to deal with the navigation challenges in narrow or crowded regions. However, the bridges used in our approach have special properties beneficial for efficient global navigation in challenging areas in the workspace.

Centralized multi-agent navigation approaches usually leverage global single-robot planning algorithms (such as PRM or RRT) to compute a roadmap or grids for the high-level coordination~\cite{Berg:2005:RMP,Berg:2005:PMP}. Compared to the decentralized methods, these algorithms compute all agents' trajectories
simultaneously and thus can better handle the complex interactions among agents. These methods can also be extended to handle non-holonomic multi-agent systems (e.g., systems composed of differential-drive robots), by using local planners like RRT~\cite{Lavalle:2001:RKP}, RRT$^*$~\cite{Karaman:2010:OKM}, or other algorithms that can deal with differential dynamics~\cite{webb2013kinodynamic}. In addition to their benefit in terms of finding feasible trajectories, the centralized algorithms can avoid deadlock cases by leveraging high-level scheduling or coordination strategies, either coupled~\cite{Cap:2013:MRS,Ferguson:2006:RRR} or decoupled~\cite{Katsev:2013:EFP,Sanchez:2002:UPP,Jur:2009:RSS,Luna:2011:PSF}. In general, these strategies only work in theory, because they have to ignore the robot's dynamics and assume the robots to be operating in a discrete state space. Finally, centralized multi-agent navigation algorithms are computationally expensive and can sometimes be too slow for real-world applications.

\section{Overview}
\label{sec:overview}

\subsection{Problem Definition}
Our goal is to enable a group of agents to reach their individual goals in a safe and efficient manner. Our approach is designed for continuous-time multi-agent systems with general linear dynamics. During the navigation, the agents should avoid collisions with static obstacles in the environment as well as with other agents.

This problem can be formally defined as follows. We take as given a set of $n$ decision-making agents sharing an environment consisting of obstacles. For simplicity, we assume the geometric shape of each agent is represented as a disc of radius $r$, and its current position and velocity are denoted as $\pv$ and $\vv$, respectively. We also assume kinodynamic constraints on the agent's motion: $\|\vv\| \leq v_{\max}$ and $\|\av\| \leq a_{\max}$, where $v_{\max}$ and $a_{\max}$ are the maximum allowed velocity and acceleration for the agent's velocity $\vv$ and acceleration $\av$. The task of each agent is to compute a trajectory toward its goal position $G$ from its initial position $I$. The trajectory should be collision-free and also satisfy other dynamics constraints.

\begin{figure}
    \centering
    \includegraphics[width=0.8\linewidth]{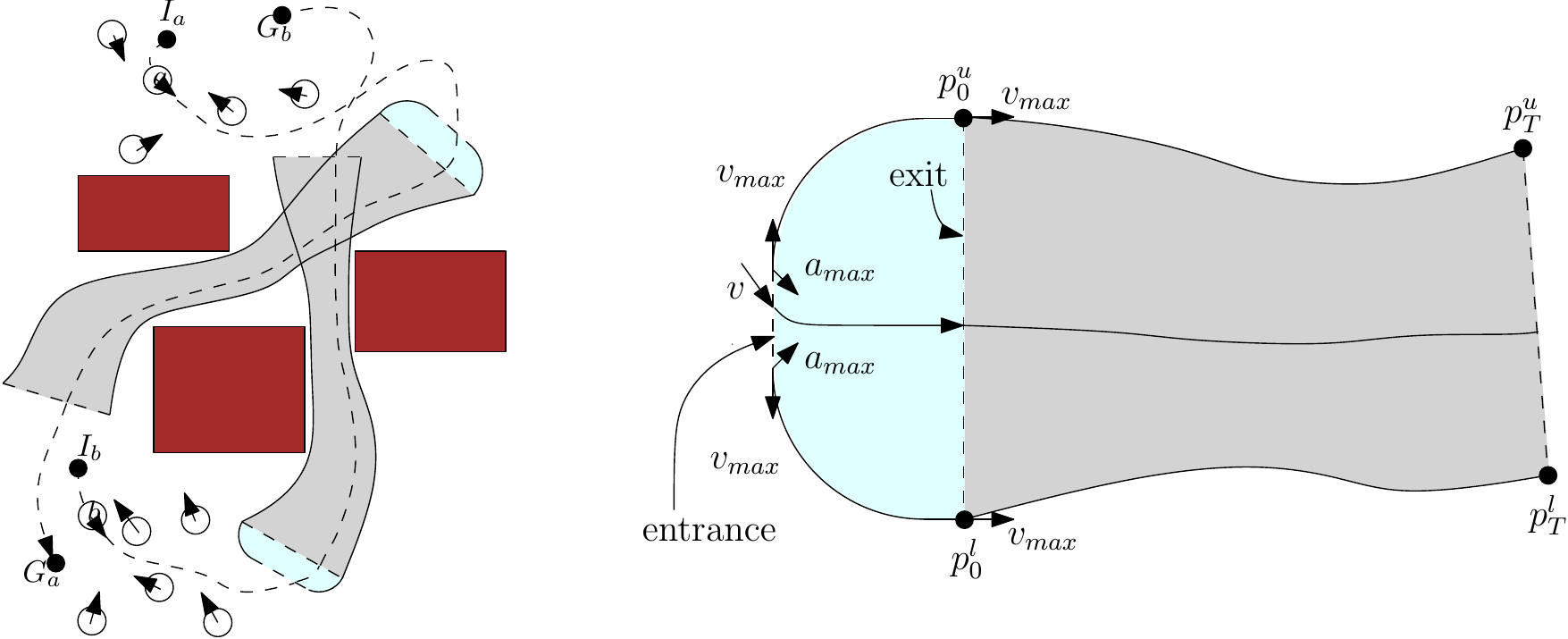}
    \caption{An overview of our approach: The left figure shows a scene with three obstacles (the brown boxes), agents (the circles) want to reach their individual goals $G_a, G_b, ...$ from their initial positions $I_a, I_b, ...$. We use two bridges (the gray regions) in the environment to help the agents navigate through the crowded areas efficiently. The agent $a$ first goes toward the entrance (the cyan region) of one bridge following a path from $I_a$ to the entrance. Next, $a$ goes inside the entrance where its velocity is gradually adjusted before entering the bridge. Finally, $a$ goes through the bridge and arrives at its goal $G_a$. The agent $b$ arrives at its goal $G_b$ through another bridge. The dashed line denotes an agent's trajectory. These bridges enable global navigation with safety guarantees. The right figure shows the entrance to a bridge: Given an agent with velocity $\vv$, we use the entrance to gradually change the agent's velocity so that while entering the bridge the agent's velocity is equal to the initial velocity of the bridge's boundary trajectories. }
    \vspace*{-0.2in}
    \label{fig:overview}
\end{figure}

\subsection{Our Approach}
Given a crowded scenario as shown in Figure~\ref{fig:overview}, the most efficient navigation strategy for a group of agents should be as follows: in the open space outside the narrow passage, each agent can navigate according to a path computed by optimal motion planning approaches; in the narrow area, agents should maintain a line or some formation, and then pass through the narrow region in some order.

For this purpose, we allocate a set of guidance channels called \emph{bridges} in the narrow parts of the workspace, as shown in Figure~\ref{fig:overview}. Each bridge lies totally in the collision-free subset of the configuration space and its boundary is computed using RRT-based collision-free paths. Furthermore, each bridge has an attractive characteristic such that for an agent entering the bridge with its velocity satisfying a special constraint, we can always compute a trajectory that completely lies inside the bridge. In order to pass through the narrow passages of the workspace, an agent will first move toward one of the bridges according to its own dynamics. However, the agent may arrive at the bridge with an arbitrary velocity. Thus, the agent must first enter an area called \emph{entrance} in front of the bridge. The entrance will gradually adjust the agent's velocity to make sure it satisfies the requirements with respect to that bridge. Once the agent leaves the entrance and enters inside the bridge, it can follow a collision-free trajectory to pass through the crowded or narrow region efficiently and safely. After leaving the bridge, the agent can switch back to the local coordination strategy and move toward its goal. In this way, the bridges can be viewed as a "highway system" for the agents to efficiently travel through challenging scenarios.

While we can build one bridge for each agent, this may result in many overlaps between the bridges, and therefore the agents will need to slow down or even wait while moving along the bridges. A better solution is to build a few bridges that can be reached by all agents, and allow multiple agents to share one bridge by using suitable scheduling schemes.


\section{Bridge Construction}
\label{sec:bridgebuild}
In this section, we describe the details of how to automatically compute the bridges in the workspace. We will first describe our approach for a 2D workspace, which will later be extended to 3D workspaces.

We start from a zero-width bridge, which is a collision-free trajectory connecting a pair of start and goal positions in the crowded region. The trajectory is computed using a kinodynamic RRT planner~\cite{Lavalle:2001:RKP}, and has an initial velocity of $\vv_0$ with a magnitude $v_{\max}$. We then incrementally enlarge the bridge's width until it touches one obstacle in the scenario. The built bridge has a beneficial property alllowing that as long as an agent enters the bridge with the velocity $\vv_0$, it can always use an efficient interpolation scheme to calculate a collision-free trajectory passing through the bridge.
However, an agent may enter a bridge with an arbitrary velocity and thus violate the bridge's constraint on the entering velocity. To solve this problem, we add an entrance region in front of the bridge,
which provides sufficient room for an agent to gradually adjust its velocity toward $\vv_0$ in order to leverage the bridge for safe and efficient navigation. Examples of 2D bridges and their entrances are shown in Figures~\ref{fig:overview}, ~\ref{fig:2d_bridge_interpolation}.
Finally, for each entrance we compute a backward-reachable set, i.e., the set of positions from which an agent can reach the entrance. We also compute a forward-reachable set for the end line of the bridge, i.e., the set of positions that can be reached by an agent coming out of the bridge. All agents with their start positions inside the backward-reachable set and goal positions inside the forward-reachable set will leverage the constructed bridge for navigation. We repeatedly add more bridges for the remaining agents until each agent has one associated bridge.

\vspace*{-0.1in}

\subsection{Iterative Bridge Enlargement}
We first use the kinodynamic RRT planner~\cite{Lavalle:2001:RKP} to connect a pair of starting and goal positions in the crowded scenarios, and the result is a trajectory $\{\pv_0, ..., \pv_T\}$. We consider the trajectory as a bridge with zero width, and its two boundary trajectories are $\{\pv_0^u, ..., \pv_T^u\}$ and $\{\pv_0^l, ..., \pv_T^l\}$ respectively, where $\pv_i = \pv_i^u = \pv_i^l$ are overlapped points. Starting from this initial bridge, we incrementally enlarge the bridge's width until it hits one obstacle. The algorithm is as shown in Algorithm~\ref{algo:bridge_enlarge}.

\begin{algorithm}[!ht]
  \SetKwInOut{Input}{input}\SetKwInOut{Output}{output}
  \Input{Start and goal points $\pv_0$ and $\pv_T$}
  \Output{A valid bridge}
  \BlankLine
  \tcc{Initialize with a zero-width bridge}
  $\{\pv_0, ..., \pv_T\} \leftarrow$ $RRT(\pv_0, \pv_T)$ \\
  \tcc{Set the overlapped bridge upper and lower boundaries}
  $\{\pv_0^u, ..., \pv_T^u\} \leftarrow \{\pv_0, ..., \pv_T\}$ \\
  $\{\pv_0^l, ..., \pv_T^l\} \leftarrow \{\pv_0, ..., \pv_T\}$ \\

  \While{true}
  {
     \tcc{Check collision for the bridge area between two boundaries}
     collision $\leftarrow$ bridgeCollide$(\{\pv_0^u, ..., \pv_T^u\}, \{\pv_0^l, ..., \pv_T^l\})$ \\
     \If{collision}{break}
     \tcc{Change the start and goal positions of two boundaries}
     $\pv_0^u \leftarrow \pv_0^u + \Delta \pv_0$, $\pv_T^u \leftarrow \pv_T^u + \Delta \pv_T$ \\
     $\pv_0^l \leftarrow \pv_0^l - \Delta \pv_0$, $\pv_T^l \leftarrow \pv_T^l - \Delta \pv_T$ \\
     \tcc{Generate new bridge boundaries using RRT}
     $\{\pv_0^u, ..., \pv_T^u\} \leftarrow $RRT$(\pv_0^u, \pv_T^u)$ \\
     $\{\pv_0^l, ..., \pv_T^l\} \leftarrow $RRT$(\pv_0^l, \pv_T^l)$ \\
  }
  \Return A bridge $b$ with boundaries $\{\pv_0^u, ..., \pv_T^u\}$ and $\{\pv_0^l, ..., \pv_T^l\}$
  \caption{Construct a 2D bridge}
  \label{algo:bridge_enlarge}
\end{algorithm}

\vspace*{-0.15in}

\subsection{Trajectory Generation in a Bridge}
\label{sec:bridgebuild:bridgeinterp}
Given a 2D bridge, as shown in Figure~\ref{fig:2d_bridge_interpolation}, we can show that if the agent enters the bridge with a velocity equal to the initial velocities of the bridge's two boundary trajectories, we can use an efficient interpolation scheme to generate a trajectory for the agent such that the agent will always stay inside the bridge and compute a path that is collision-free with all static obstacles.

\begin{figure}
    \centering
    \includegraphics[width=0.6\linewidth]{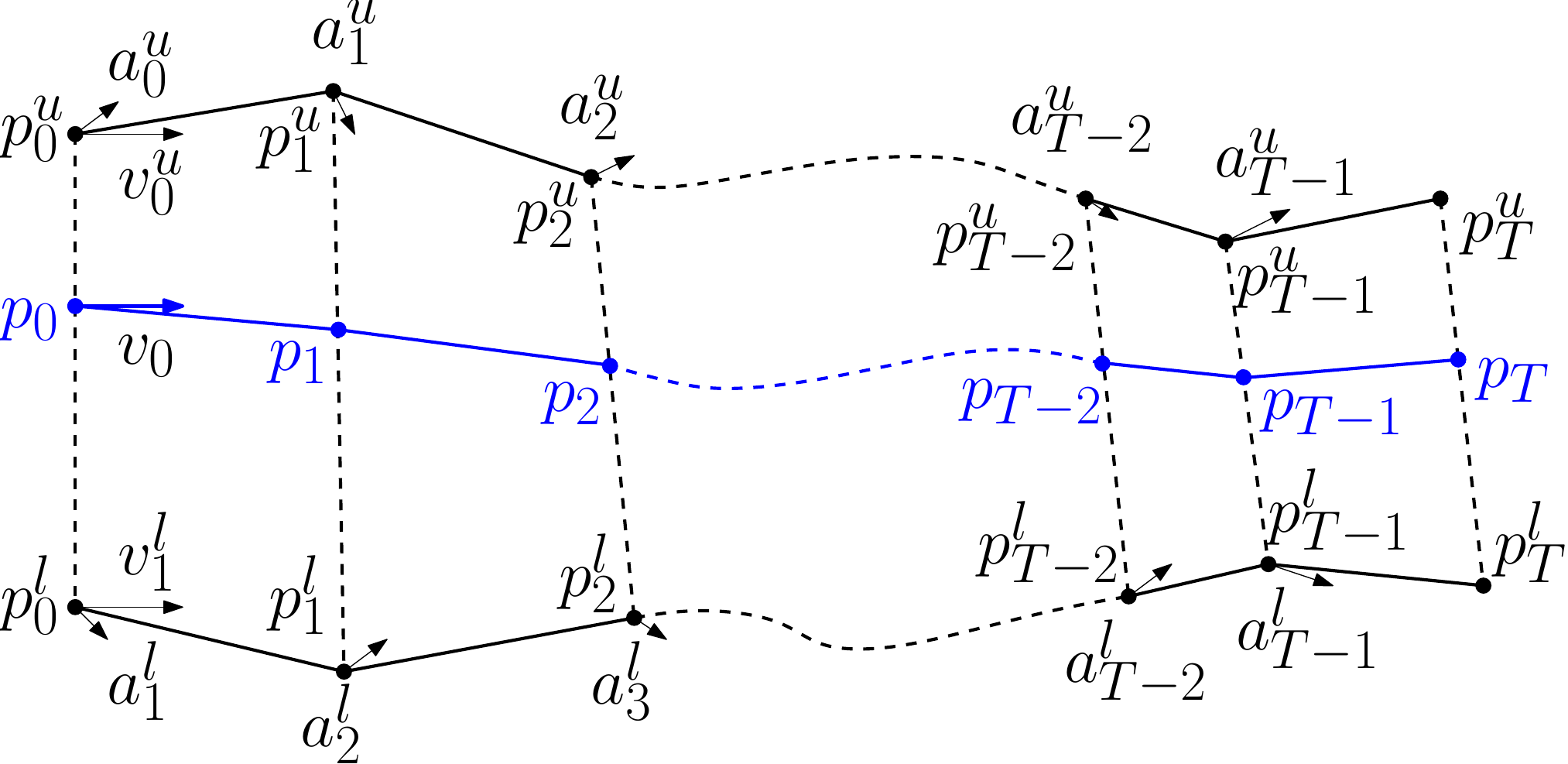}
    \caption{Trajectory interpolation in a 2D bridge: The bridge is bounded by an upper trajectory $\{\pv_0^u,\pv_1^u, ..., \pv_{T-1}^u, \pv_T^u\}$ and a lower trajectory $\{\pv_0^l, \pv_1^l, ..., \pv_{T-1}^l, \pv_T^l\}$, where the velocity and acceleration at each trajectory point $\pv_i$ are $\vv_i$ and $\av_i$ respectively. The start line and end line of the bridge are $\pv_0^u \pv_0^l$ and $\pv_T^u \pv_T^l$ respectively. The agent enters the bridge at the position $\pv_0$ on the start line with a velocity $\vv_0$. If $\vv_0 = \vv_0^u = \vv_0^l$, the trajectory interpolation scheme can compute the agent's trajectory as $\{\pv_0, ..., \pv_T\}$ which will be located completely inside the bridge. }
    \vspace*{-0.15in}
    \label{fig:2d_bridge_interpolation}
\end{figure}

The interpolation details are described in Algorithm~\ref{algo:2d_bridge_interpolation}. In particular, we choose the acceleration of the agent at each time step $i \Delta t$ as a linear interpolation of the accelerations of the corresponding waypoints on the boundary trajectories: $\av_i = (1 - r) \av_i^u + r \av_i^l$. The interpolation coefficient $r$ is the ratio based on which the agent's initial position $\pv_0$ partitions the bridge area's start line $\pv_0^u\pv_0^l$: $r = \frac{|\pv_0^u \pv_0|}{|\pv_0 \pv_0^l|}$. We can prove that in this way, the generated trajectory $\{\pv_0, ..., \pv_T\}$ always stays inside the bridge:
\begin{theorem}
\label{thm:2d_bridge_interpolation}
{\em The interpolated trajectory $\{\pv_0, ..., \pv_T\}$ always stays inside the bridge}.
\end{theorem}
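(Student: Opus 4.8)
The plan is to reduce the claim to a statement about convex combinations of trajectories. Observe that both boundary trajectories $\{\pv_i^u\}$ and $\{\pv_i^l\}$ are generated by a kinodynamic RRT planner, so each satisfies the same discrete double-integrator recurrence: $\vv_{i+1}^u = \vv_i^u + \av_i^u \Delta t$ and $\pv_{i+1}^u = \pv_i^u + \vv_i^u \Delta t + \tfrac12 \av_i^u \Delta t^2$, and similarly for the lower boundary. The interpolated trajectory is defined by $\av_i = (1-r)\av_i^u + r\av_i^l$, with $\pv_0 = (1-r)\pv_0^u + r\pv_0^l$ and (by the entering-velocity hypothesis) $\vv_0 = \vv_0^u = \vv_0^l = (1-r)\vv_0^u + r\vv_0^l$. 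The first step is therefore to prove by induction on $i$ that $\pv_i = (1-r)\pv_i^u + r\pv_i^l$ and $\vv_i = (1-r)\vv_i^u + r\vv_i^l$ for all $i$. The base case holds by construction; the inductive step is immediate because the update maps for position and velocity are affine in $(\pv_i,\vv_i,\av_i)$, so an affine combination of two solutions with matching coefficients is again a solution with the same coefficients.

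The second step is to translate this algebraic fact into the geometric conclusion. Having shown $\pv_i$ lies on the segment $[\pv_i^u, \pv_i^l]$ for every $i$, I would then argue that the straight segment $\overline{\pv_i^u \pv_i^l}$ itself lies inside the bridge region. Here I would appeal to how the bridge region is defined: it is the collision-free area swept between the two boundary trajectories, and the iterative enlargement in Algorithm~\ref{algo:bridge_enlarge} stops precisely when that swept area first touches an obstacle — so by construction the closed region bounded by the two boundary curves (together with the start line $\pv_0^u\pv_0^l$ and end line $\pv_T^u\pv_T^l$) is exactly the bridge, and each transversal segment $\overline{\pv_i^u\pv_i^l}$ is contained in it. Since $\pv_i \in \overline{\pv_i^u\pv_i^l} \subseteq b$ for each $i$, the waypoints of the interpolated trajectory all lie in the bridge; if one wants the continuous-time curve between consecutive waypoints to lie in the bridge as well, the same affine-combination argument applies at every intermediate time $t \in [i\Delta t, (i+1)\Delta t]$, giving $\pv(t) = (1-r)\pv^u(t) + r\pv^l(t)$, which lies on the segment joining the two boundary points at time $t$.

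The main obstacle is the geometric step, not the algebraic one: I must make precise the claim that the region "between" the two boundary trajectories is convex in each transversal slice, i.e.\ that the segment $\overline{\pv_i^u\pv_i^l}$ does not poke outside the bridge or cross an obstacle. This is clean if the bridge is defined as the union of these transversal segments (a ruled region), in which case containment is immediate and collision-freeness follows from the termination condition of the enlargement loop; it requires more care if the bridge boundary can be non-convex or if the two boundary trajectories can come close to crossing, in which case I would need an assumption (or a lemma from the construction) that the boundary trajectories stay "parallel enough" — e.g.\ that for each $i$ the segment $\overline{\pv_i^u\pv_i^l}$ does not intersect any $\overline{\pv_j^u\pv_j^l}$ for $j\neq i$. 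I would state that transversality/non-crossing property explicitly as the hypothesis that makes the ruled region well-defined, and then the theorem follows by combining it with the affine-combination identity. The velocity and acceleration magnitude constraints ($\|\vv_i\|\le v_{\max}$, $\|\av_i\|\le a_{\max}$) are a bonus: since $\vv_i$ and $\av_i$ are convex combinations of the corresponding boundary quantities, they inherit the bounds by the triangle inequality, though this is not strictly needed for the containment claim itself.
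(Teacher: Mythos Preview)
Your proposal is correct and follows essentially the same approach as the paper: both prove by induction that $\pv_i = (1-r)\pv_i^u + r\pv_i^l$ and $\vv_i = (1-r)\vv_i^u + r\vv_i^l$ using the affine update rules, and then conclude that each waypoint lies on the transversal segment and hence inside the bridge. Your treatment of the geometric containment step is in fact more careful than the paper's, which simply asserts the conclusion once the convex-combination identity is established without discussing the ruled-region structure or any transversality hypothesis.
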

\begin{proof} We first show that the interpolated trajectory has the following two properties $\pv_i = (1-r) \pv_i^u + r \pv_i^l$ and $\vv_i = (1-r) \vv_i^u + r \vv_i^l$, for $i = 1, ..., T$. We can prove these two statements by induction on $i$. If $i=1$, $\pv_0 = (1-r) \pv_0^u + r \pv_0^l$ is trivial because this is the definition of $r$; and since $\vv_0 = \vv_0^u = \vv_0^l$ as required by the bridge's definition, $\vv_0 = (1-r) \vv_0^u + r \vv_0^l$ is also obvious. Now consider the case in which $i > 1$:
\begin{align}
\vv_i & =  \vv_{i-1} + \av_{i-1} \Delta t
=  (1-r) \vv_{i-1}^u + r \vv_{i-1}^l + [ (1-r) \av_i^u + r \av_i^l] \Delta t \\
& =  (1-r) [\vv_{i-1}^u + \av_i^u \Delta t] + r [\vv_{i-1}^l + \av_i^l \Delta t]
=  (1-r) \vv_i^u + r \vv_i^l.  \notag
\end{align}
Similarly, we have
\begin{align}
\pv_i & = \pv_{i-1} + \vv_{i-1} \Delta t + \frac{1}{2} \av_{i-1} (\Delta t)^2
 = (1-r) \pv_i^u + r \pv_i^l.
\end{align}
Based on the induction hypothesis, the waypoints of the interpolated trajectory will always be a linear interpolation of the corresponding waypoints of the bridge's two boundary trajectories. Thus, the interpolated trajectory must be contained inside the bridge.
\end{proof}

\begin{algorithm}[!ht]
  \SetKwInOut{Input}{input}\SetKwInOut{Output}{output}
  \Input{The bridge's two boundary trajectories $\{\pv_0^u, ..., \pv_T^u\}$ and $\{\pv_0^l, ..., \pv_T^l\}$, along with each waypoint $\pv_i$'s velocity $\vv_i$ and acceleration $\av_i$. The initial position $\pv_O$ and velocity $\vv_0$ when the agent enters the bridge. }
  \Output{The agent's trajectory $\{\pv_0, ..., \pv_T\}$}
  \BlankLine
  \tcc{Compute the ratio into which $\pv_0$ partitions the start line $\pv_0^u\pv_0^l$}
  $r = \frac{|\pv_0^u \pv_0|}{|\pv_0 \pv_0^l|}$ \;
  \For {$i = 1, ..., T-1$}
  {
    $\av_i = (1 - r) \av_i^u + r \av_i^l$ \;
    $\pv_{i+1} = \pv_i + \vv_i \Delta t + \frac{1}{2} \av_i (\Delta t)^2$ \;
    $\vv_{i+1} = \vv_i + \av_i \Delta t$ \;
  }
  \caption{Trajectory generation in a 2D bridge}\label{algo:2d_bridge_interpolation}
\end{algorithm}

\vspace*{-0.1in}

\subsection{2D Entrance Construction and Trajectory Generation}
When an agent enters the bridge, its velocity must be the same as the initial velocity of the bridge's boundary trajectories. However, agents may arrive at the bridge with an arbitrary velocity, and thus the bridge may not be able to generate a trajectory for an agent that is completely inside the bridge. Our solution is to leave some space in front of the bridge called the \emph{entrance} to the bridge. This space works like a buffer zone where the agent can gradually adjust its velocity to meet the bridge's requirement.

An example of the 2D entrance is shown in the right side in Figure~\ref{fig:overview}, which includes four parts: one start line, one end line, and two boundary curves. The end line of the entrance is also the start line of the bridge. Each boundary curve is composed of a parabola followed by a line. The two parabolas correspond to the trajectories in which the agent's velocity is gradually rotated by $90$ degrees from the initial
$\pm v_{\max} \hat{\mathbf y}$ to the final $v_{\max} \hat{\mathbf x}$. The velocity change is achieved by using a constant acceleration $\mathbf a$, the magnitude of which is $a_{\max}$ and the direction of which is $\frac{\sqrt{2}}{2} \hat{\mathbf x} \mp \frac{\sqrt{2}}{2} \hat{\mathbf x}$. For convenience, we assume the bridge's start line is along the $\hat{\mathbf y}$ direction. The two lines following the parabolas have a length $(\sqrt{\frac{2}{3}} - \sqrt{\frac{1}{2}}) \frac{v_{\max}^2}{a_{\max}}$ each.

The shape of the entrance chosen has the property that, for any agent with an arbitrary velocity $\vv$ ($|\vv| \leq v_{\max}$) entering the entrance, we can always find a sequence of accelerations to gradually change its velocity from $\vv$ to $v_{\max} \hat{\mathbf x}$(i.e., with a magnitude of $v_{\max}$ and perpendicular to the bridge's start line). After this velocity adjustment, the agent will have a velocity equal to the initial velocity of the bridge's boundary trajectories when entering the bridge.
According to Section~\ref{sec:bridgebuild:bridgeinterp}, this property is required for the trajectory interpolation inside the bridge.

We can choose the acceleration such that, during the adjustment of the velocity, the agent will always stay inside the area of the entrance and is therefore, guaranteed to avoid collisions with other agents and obstacles. In algorithm ~\ref{algo:2d_bridge_interpolation}, we provide the details about velocity adjustment. In particular, given an agent entering the entrance at a velocity $\vv = (v_x, v_y)$, our goal is to increase $v_x$ to $v_{\max}$ and decrease $v_y$ to $0$. We first determine a suitable acceleration $(a_x, a_y)$ by comparing the speed gaps in both directions (line 1). This acceleration guarantees that $v_x$ will increase to $v_{\max}$ after $v_y$ arrives at $0$, which is necessary to keep the agent inside the entrance region. After $v_y$ becomes $0$, the acceleration along the $\hat{\mathbf y}$ will become zero, while the acceleration along $\hat{\mathbf x}$ remains the same. The agent moves on until its velocity is $v_{\max} \hat{\mathbf x}$. After that, if the agent has not reached the exit line, it will continue with the current velocity. After computing the extreme positions at which the resulting trajectory will arrive, we can show that the trajectory will never go out of the entrance region and thus is guaranteed to be collision-free with static obstacles.

\subsection{Bridges' Forward and Backward Reachable Regions}
After generating the bridge and its entrance, we then compute its forward and backward reachable regions.
We denote $\mathcal R^+[\pv, \tau]$ as the forward-reachable set of a position $\pv$, (i.e., the set of positions that can be reached from $\pv$ with time less than $\tau$), and denote $\mathcal R^-[\pv, \tau]$ as the backward-reachable set, (i.e., the set of positions that can reach $\pv$ with time less than $\tau$):
\begin{align}
\mathcal R^+(\pv, \tau) &= \{\pv' | \emph{time}(\pv, \pv') < \tau \} \\
\mathcal R^-(\pv, \tau) &= \{\pv' | \emph{time}(\pv', \pv) < \tau \},
\end{align}
where $\emph{time}(\mathbf x, \mathbf y)$ measures the time an agent needs to move from a starting position $\mathbf x$ to a goal position $\mathbf y$. Both reachability sets can be efficiently estimated using the method proposed in~\cite{webb2013kinodynamic}.

Leveraging the concept of forward and backward reachable sets, we can compute the forward and backward reachable regions for the entire bridge as
\begin{align}
\mathcal R^+(\emph{bridge}, \tau) &= \bigcup_{\pv \in \emph{bridge end line} } \mathcal R^+(\pv, \tau) \\
\mathcal R^-(\emph{bridge}, \tau) &= \bigcup_{\pv \in \emph{entrance start line} } \mathcal R^-(\pv, \tau).
\end{align}

\subsection{Bridge Assignment among Agents}
We have finished describing how to build a bridge, its entrance, and the corresponding reachability regions. We will now discuss how to build a set of bridges for all the agents in the scenario, as well as how to assign a bridge to each agent. Our solution is to first build a bridge for a single agent. Next, we check whether or not there are any other agents whose initial and goal positions are within the backward and forward reachable regions of this bridge. If so, these agents can also leverage this bridge for navigation, and thus we assign this bridge to these agents. We repeatedly add more bridges until all agents have one associated bridge. The algorithm is as shown in Algorithm~\ref{algo:bridge_generator}.

\begin{algorithm}[!ht]
  \SetKwInOut{Input}{input}\SetKwInOut{Output}{output}
  \Input{Initial and goal configurations pairs $IG= \{(I_i, G_i)\}_{i=0}^{n-1}$ for all $n$ agents}
  \Output{A set of bridges $B = \{B_j\}$}
  \BlankLine
  $B \leftarrow \emptyset$ \\
  \tcc{compute the bridges for all of the agents }
  \While {$IG \neq \emptyset$}
  {
    Select two configurations $(I, G)$ from the set $IG$ \\
    Construct a 2D bridge $b$ and its entrance using Algorithm~\ref{algo:bridge_enlarge}  \\
    Compute the reachability regions $\mathcal R^+(b, \tau)$ and $\mathcal R^-(b, \tau)$ \\
    \tcc{compute the bridge for the agents }
    \For{$(I_i, G_i) \in IG$}
    {
       \If{$I_i \in \mathcal R^-(b, \tau)$ \emph{and} $G_i \in \mathcal R^+(b, \tau)$}
       {Remove $(I_i,G_i)$ from $IG$ \\ Assign bridge $b$ to the agent $i$ \\ Add bridge $b$ to $B$}
    }
  }
  \Return $B$
  \caption{Generating bridges for all agents}
  \label{algo:bridge_generator}
\end{algorithm}

\vspace*{-0.1in}

\subsection{3D Bridge and Entrance}
The bridge and entrance algorithms described above can be extended to the 3D workspace.
Figure~\ref{fig:3d_bridge_interpolation} illustrates the 3D bridge and the corresponding trajectory interpolation algorithm. The boundary of the 3D bridge is composed of $K$ trajectories $\{\pv_0^k, \pv_1^k, ..., \pv_{T-1}^k, \pv_T^k\}$, computed by the RRT algorithm, where $k=1,..., K$. The initial points $\pv_0^k$ of all these $K$ trajectories are located on the same plane(called the start plane of the bridge), and their initial velocities $\vv_0^k$ are the same, all having the magnitude of $v_{\max}$. Given an agent entering the bridge at position $\pv_0$ on the start plane, we can compute its trajectory using interpolation as follows. First, from all the initial points, we can select three points whose convex combination can be used for the point $\pv_0$.
W.l.o.g., we assume that these three points are $\pv_0^0, \pv_0^1, \pv_0^2$, and their convex combination is $\pv_0 = u \pv_0^0 + v \pv_0^1 + w\pv_0^2$, where $0\leq u, v, w\leq 1$ and $u+v+w = 1$. Similar to Theorem~\ref{thm:2d_bridge_interpolation}, we can show that if we choose the acceleration at time $t$ to be $\av_t = u \av_t^0 + v \av_t^1 + w\av_t^2$, the resulting trajectory will have velocity $\vv_t = u \vv_t^0 + v \vv_t^1 + w\vv_t^2$ and position $\pv_t =  u \pv_t^0 + v \pv_t^1 + w\pv_t^2$ at time $t$. It follows that this trajectory will always stay inside the 3D bridge.
The 3D entrance construction and trajectory generation algorithms are also similar to the 2D case, except that the entrance's boundary is now a surface bounded by a series of $K$ parabolas.

\begin{figure}
    \centering
    \includegraphics[width=0.8\linewidth]{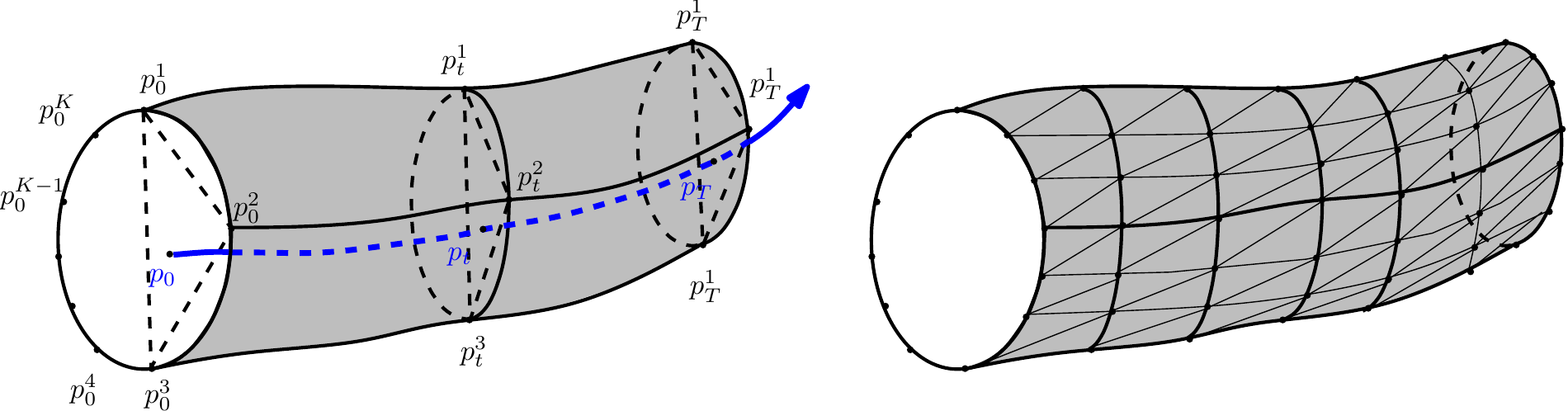}
    \caption{The left figure shows the trajectory interpolation in a 3D bridge: the bridge is bounded by a set of boundary trajectories $\{\pv_0^k, \pv_1^k, ..., \pv_{T-1}^k, \pv_T^k\}$, where $k=1, ..., K$. The agent enters the bridge at the position $\pv_0$ on the start plane. The trajectory interpolation is performed by first choosing three points from $\{\pv_i^k\}$ that can provide a convex combination for the point $\pv_0$. Here we assume the first three points are chosen, and thus $\pv_0 = u \pv_0^1 + v \pv_0^2 + w\pv_0^3$, where $0 \leq u,v,w \leq 1$ and $u+v+w = 1$. Similar to the 2D bridge, we can always choose a suitable acceleration $\av_t$  so that the resulting trajectory $\{\pv_0, ..., \pv_T\}$ completely locates inside the bridge. The right figure shows the triangulation for the 3D bridge shown in the left figure. In this case, each point is one waypoint in one of the $K$ boundary trajectories. }
    \label{fig:3d_bridge_interpolation}
\end{figure}

\vspace*{-0.1in}
\subsection{Collision Checking for a Bridge}
\label{sec:method:collision}
Given a bridge, we need to make sure that it is collision-free. To perform collision checking on a bridge, we triangulate the bridge boundary, and then perform collision checking between the triangulated bridge boundary with static obstacles in the environment. The left side in figure~\ref{fig:3d_bridge_interpolation} shows the triangulation result for the 3D bridge . The collision checking is performed using traditional bounding volume techniques~\cite{Gottschalk:1996:OHS}.

\section{Bridges' Forward and Backward Reachable Regions}
After generating the bridge and its entrance, we then compute its forward and backward reachable regions.
We denote $\mathcal R^+[\pv, \tau]$ as the forward-reachable set of a position $\pv$,( i.e., the set of positions that can be reached from $\pv$ with time less than $\tau$), and denote $\mathcal R^-[\pv, \tau]$ as the backward-reachable set, (i.e., the set of positions that can reach $\pv$ with time less than $\tau$):
\begin{align}
\mathcal R^+(\pv, \tau) &= \{\pv' | \emph{time}(\pv, \pv') < \tau \} \\
\mathcal R^-(\pv, \tau) &= \{\pv' | \emph{time}(\pv', \pv) < \tau \},
\end{align}
where $\emph{time}(\mathbf x, \mathbf y)$ measures the time an agent needs to move from a starting position $\mathbf x$ to a goal position $\mathbf y$. Both reachability sets can be efficiently estimated using the method proposed in~\cite{webb2013kinodynamic}.

Leveraging the concept of forward and backward reachable sets, we can compute the forward and backward reachable regions for the entire bridge as
\begin{align}
\mathcal R^+(\emph{bridge}, \tau) &= \bigcup_{\pv \in \emph{bridge end line} } \mathcal R^+(\pv, \tau) \\
\mathcal R^-(\emph{bridge}, \tau) &= \bigcup_{\pv \in \emph{entrance start line} } \mathcal R^-(\pv, \tau).
\end{align}

\subsection{Bridge Assignment among Agents}
We have finished describing how to build a bridge, its entrance, and the corresponding reachability regions. We will now discuss how to build a set of bridges for all the agents in the scenario, as well as how to assign a bridge to each agent. Our solution is to first build a bridge for a single agent. Next, we check whether or not there are any other agents whose initial and goal positions are within the backward and forward reachable regions of this bridge. If so, these agents can also leverage this bridge for navigation, and thus we assign this bridge to these agents. We repeatedly add more bridges until all agents have one associated bridge. The algorithm is as shown in Algorithm~\ref{algo:bridge_generator}.

\begin{algorithm}[!ht]
  \SetKwInOut{Input}{input}\SetKwInOut{Output}{output}
  \Input{Initial and goal configurations pairs $IG= \{(I_i, G_i)\}_{i=0}^{n-1}$ for all $n$ agents}
  \Output{A set of bridges $B = \{B_j\}$}
  \BlankLine
  $B \leftarrow \emptyset$ \\
  \tcc{compute the bridges for all of the agents }
  \While {$IG \neq \emptyset$}
  {
    Select two configurations $(I, G)$ from the set $IG$ \\
    Construct a 2D bridge $b$ and its entrance using Algorithm~\ref{algo:bridge_enlarge}  \\
    Compute the reachability regions $\mathcal R^+(b, \tau)$ and $\mathcal R^-(b, \tau)$ \\
    \tcc{compute the bridge for the agents }
    \For{$(I_i, G_i) \in IG$}
    {
       \If{$I_i \in \mathcal R^-(b, \tau)$ \emph{and} $G_i \in \mathcal R^+(b, \tau)$}
       {Remove $(I_i,G_i)$ from $IG$ \\ Assign bridge $b$ to the agent $i$ \\ Add bridge $b$ to $B$}
    }
  }
  \Return $B$
  \caption{Generating bridges for all agents}
  \label{algo:bridge_generator}
\end{algorithm}

\subsection{Global Navigation using Bridges}
Once the bridges and their entrances are computed, we can leverage them for efficient multi-agent global navigation. Each given agent, it will first move toward the bridge assigned to it along an optimal trajectory computed using~\cite{webb2013kinodynamic}. Once it reaches the entrance, it can enter the entrance and then go through the crowded or narrow area by following the trajectory interpolated by the bridge. After leaving the bridge, the agent switches back to moving toward its individual goal following an optimal trajectory computed using~\cite{webb2013kinodynamic}.

\vspace*{-0.1in}

\section{Global Navigaion using Bridges}
\label{sec:navigation}

Once the bridges and their entrances are computed, we can leverage them for efficient multi-agent global navigation. Each given agent, it will first move toward the bridge assigned to it along an optimal trajectory computed using~\cite{webb2013kinodynamic}. Once it reaches the entrance, it can enter the entrance and then go through the crowded or narrow area by following the trajectory interpolated by the bridge. After leaving the bridge, the agent switches back to moving toward its individual goal following an optimal trajectory computed using~\cite{webb2013kinodynamic}.

\subsection{Inter-Trajectory Scheduling}
\label{sec:scheduling}
However, there is still one unresolved situation: several agents may try to leverage the bridge at the same time, and this may result in collisions between the agents inside the bridge. To avoid this problem, we use a scheduling scheme among the agents so that they can share the bridge in a safe and efficient manner for collision-free navigation. This is achieved by inter-trajectory scheduling.

Algorithm~\ref{algo:Planner} shows a simple scheduling algorithm among trajectories. In particular, we check whether a planned trajectory will collide with any previously scheduled trajectory. If so, we delay the trajectory for a small time $\delta t$. This process continues until all collisions among trajectories are resolved.

\begin{algorithm}[!ht]
  \SetKwInOut{Input}{input}\SetKwInOut{Output}{output}
  \Input{Original plans $P = \{P_i\}_{i=0}^n$}
  \Output{Scheduled plans $P' = \{P'_i\}_{i=0}^n$}
  \BlankLine

  \For{$i=1$ \emph{to} $n$}
  {
    $P_i' \leftarrow P_i$ \\
    \For{$j=0$ \emph{to} $i-1$}
    {
      \While{$P_i'$ \emph{collide with} $P_j'$}
      {
        $P_i'$ postpone $\delta t$
      }
    }
  }

  return $P'$
  \caption{Inter-bridge scheduling}
  \label{algo:Planner}
\end{algorithm}

\vspace*{-0.15in}

\section{Completeness and Performance Analysis}
\label{sec:analysis}

Our bridge-based planning algorithm is probabilistically complete. In particular, our trajectory generation includes two parts: the trajectory generation inside the bridge and the trajectory generation outside the bridge. The former is probabilistically complete because the bridge is computed using RRT with the start and goal positions inside the agents' reachable sets, and thus we can always find a valid bridge with probability $1$ if one exists. Once the bridge is constructed, the trajectory generation inside the bridge is deterministic and thus is always possible.

The complexity of our method's online phase is $\mathcal O(n^2)$, where $n$ is the number of agents. In particular, the inter-trajectory scheduling in Section~\ref{sec:scheduling} dominates the running time cost. As shown in line $5$ of Algorithm~\ref{algo:Planner}, each agent needs to check whether its planned trajectory will collide with any plans of previously scheduled agents. If so, the agent will delay the plan to avoid collisions. For $n$ agents, we need to execute $\frac{1}{2}n^2$ checks in total. For each check between two trajectories, we need to further check whether any two segments from these paths collide with each other. For two long trajectories $P$ and $P'$, seems that we need to perform a pairwise checking with an expensive $|P|\times |P'|$ complexity. However, note that when agents are moving inside the same bridge, their speeds are determined by their position in the start line and are fixed while passing through the bridge (Theorem~\ref{thm:2d_bridge_interpolation}). As a result, agents falling behind will never catch up the agents in the front, and thus we can determine the collision status between two plans by only checking collisions between a few segments. In fact we observe that at most $C<10$ collision checks are performed between two trajectories in all our experiments. In this way, the computational complexity of the online phase is $C \times \frac{1}{2}n^2 = \mathcal O(n^2)$.

\section{Experiment}
\label{sec:experiment}

\begin{table}
\centering{}%
\begin{tabular}{|c|c|c|c|c|}
 \hline
 & \#Agent &\#Bridges & \# Overhead 1(ms)& \# Overhead 2(ms) \tabularnewline \hline
 Benchmark 1 &96& 2 & 204513 &1012\tabularnewline
 Benchmark 2 &96& 2 & 453267 &1432\tabularnewline
 Benchmark 3 &100& 2& 553462 &1431\tabularnewline
 Benchmark 4 &96& 2 & 1079381&1532\tabularnewline
\hline
\end{tabular}
\caption{The overhead 1 shows the time cost for constructing the given number of bridges in various benchmarks.
The overhead 2 shows the overhead of the scheduling algorithm.}
\label{tab:BuildCorridor}
\end{table}

\begin{figure*}[t]
\centering
\includegraphics[width=0.8\linewidth]{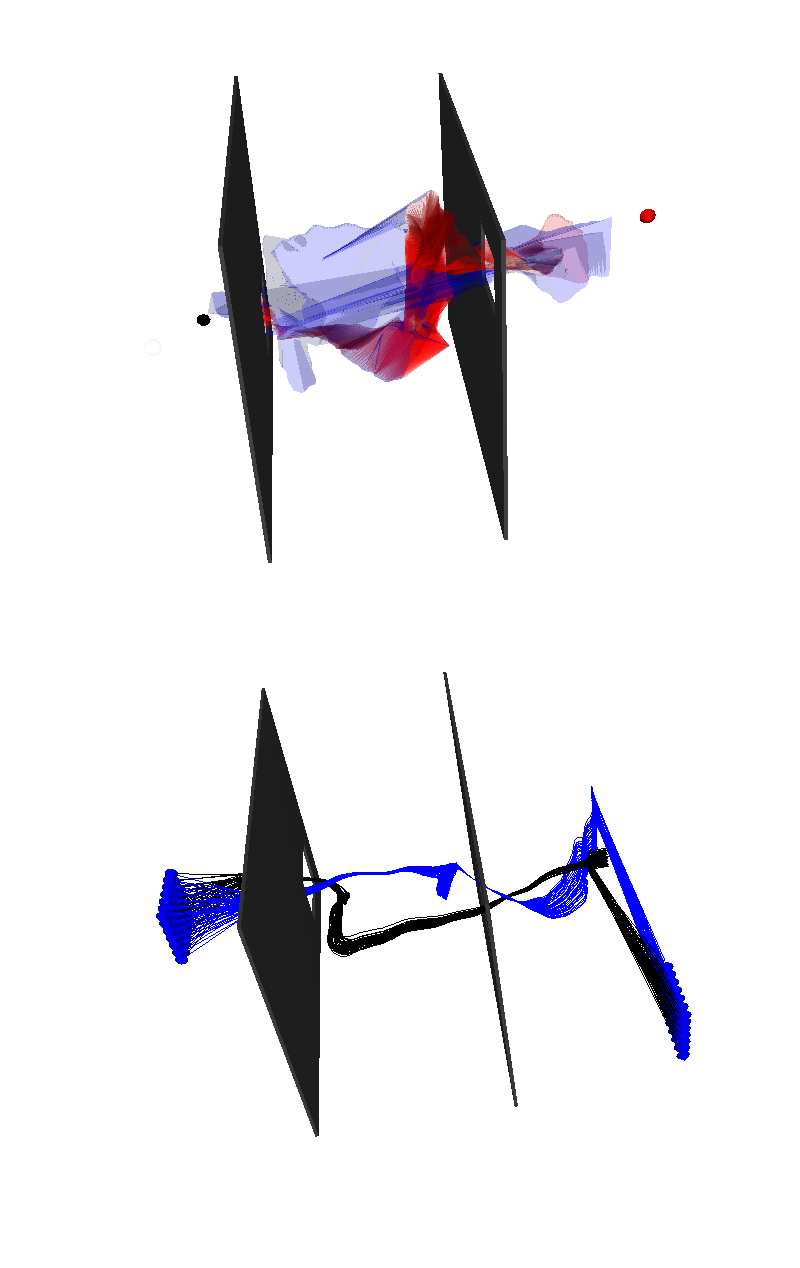}
\caption{This figure shows the bridges(above picture) and trajectories(below picture) of 3D scenario. In this benchmark, two groups of agents each has 49 agents exchange their positions.}
\label{fig:scenario1}
\end{figure*}

\begin{figure*}[t]
\includegraphics[width=0.8\linewidth]{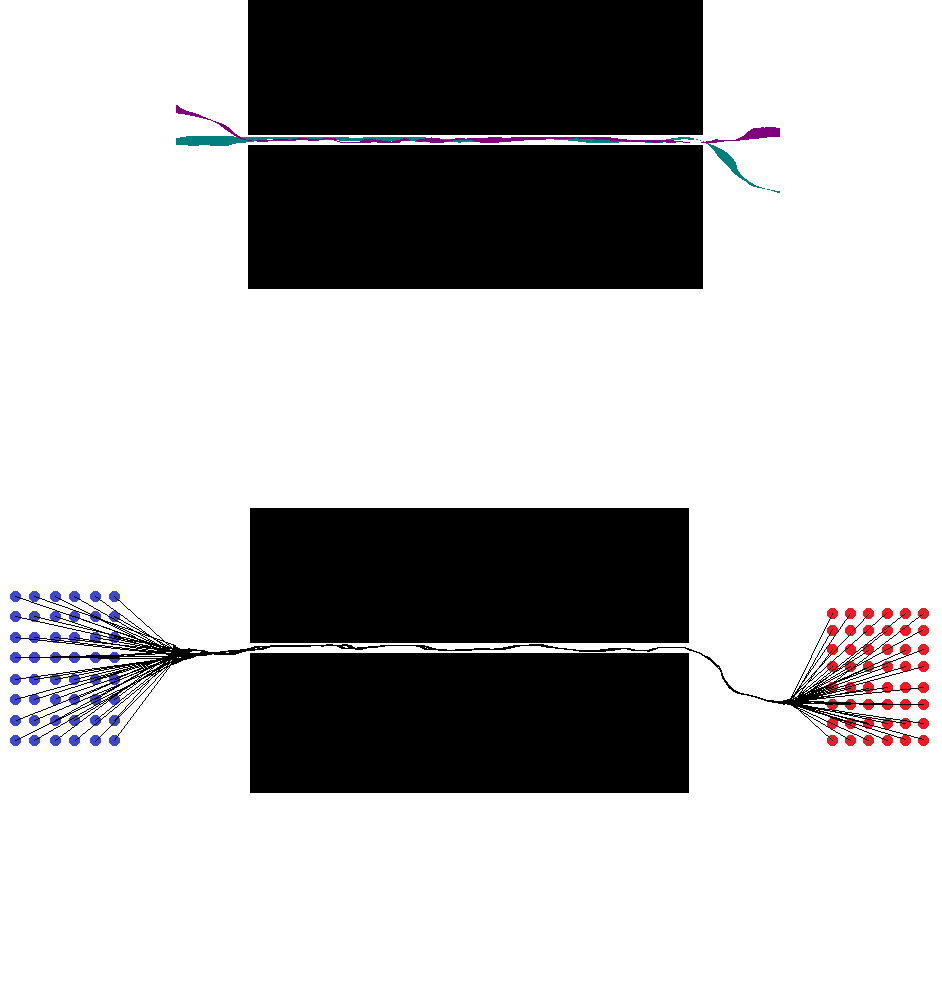}
\caption{This figure shows the bridges(above picture) and the trajectories(below picture) of 2D scenario 1. In this benchmark, two groups of agents each has 48 agents exchange their positions.}
\label{fig:scenario2}
\end{figure*}

\begin{figure*}[t]
\centering
\includegraphics[width=0.8\linewidth]{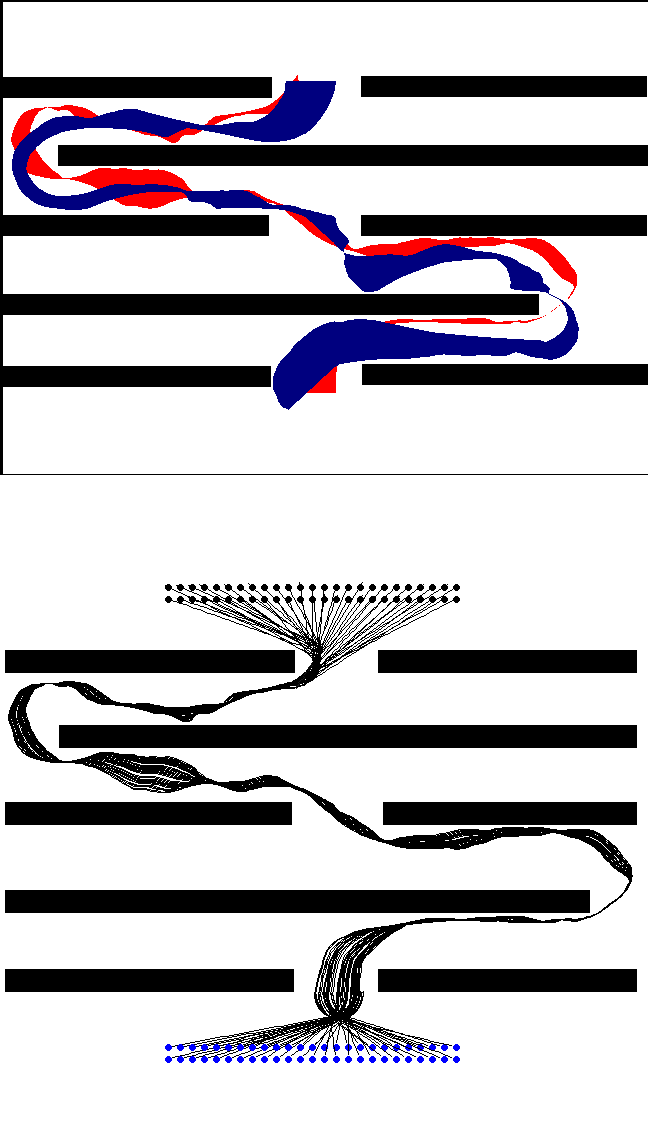}
\caption{This figure shows the bridges(above picture) and the trajectories(below picture) of 2D scenario 2. In this benchmark, two groups of agents each has 50 agents exchange their positions.}
\label{fig:scenario3}
\end{figure*}

\begin{figure*}[t]
\centering
\includegraphics[width=0.8\linewidth]{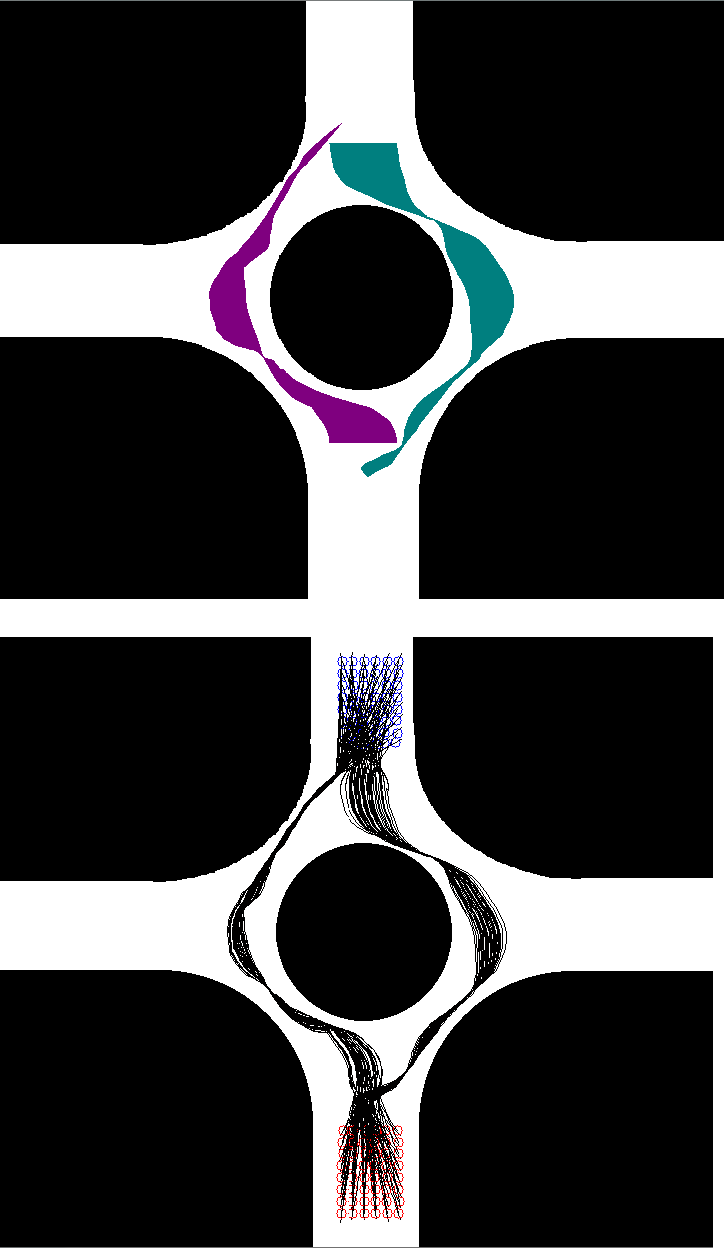}
\caption{This figure shows the bridges(above picture) and the trajectories(below picture) of 2D scenario 3. In this benchmark, two groups of agents each has 48 agents exchange their positions.}
\label{fig:scenario4}
\end{figure*}

\begin{table}
\centering{}%
\begin{tabular}{|c|c|c|}
 \hline
 & Outside time (ms) & Inside time (ms) \tabularnewline \hline
 Benchmark 1 & 23 & $<1$ \tabularnewline
 Benchmark 2 & 47 & $<1$ \tabularnewline
 Benchmark 3 & 132& $<1$ \tabularnewline
 Benchmark 4 & 560 & $<1$ \tabularnewline
\hline
\end{tabular}
\caption{The average time cost for each agent to compute its trajectory inside and outside the bridges on various benchmarks.}
\label{tab:TrajectoryBuilding1}
\end{table}

\begin{table}
\centering{}%
\begin{tabular}{|c|c|c|c|c|}
 \hline
  & Benchmark 1  & Benchmark 2  &Benchmark 3 & Benchmark 4 \tabularnewline \hline
 HRVO (\#collision times) &- &212 &- &517 \tabularnewline
HRVO+PRM (\#collision times) &13 &9 &11 &4 \tabularnewline
 OURS (\#collision times) &0 &0 & 0 &0 \tabularnewline
\hline
\end{tabular}
\caption{The comparison of collision times between our method, the local method(HRVO), and the local method plus traditional global method(PRM).}
\label{tab:comparison}
\end{table}

\begin{table}
\centering{}%
\begin{tabular}{|c|c|c|c|c|}
 \hline
  & Benchmark 1  & Benchmark 2  &Benchmark 3 & Benchmark 4 \tabularnewline \hline
 HRVO (\#frames) &- &3938 &- &4882 \tabularnewline
HRVO+PRM (\#frames) &3905 &2215 &1101 &752 \tabularnewline
 OURS (\#frames) &2024 &1482 & 762 &417 \tabularnewline
\hline
\end{tabular}
\caption{The comparison of simulation time between our method, the local method(HRVO), and the local method plus traditional global method(PRM). The '-' marks mean agents get stuck in the obstacles so they cannot achieve their goals.}
\label{tab:TrajectoryBuilding2}
\end{table}

In this section, we describe our implementation and highlight the improved performance of our navigation algorithm on four challenging benchmarks, of which two are 3D and the others are 2D as illustrated in Figure~\ref{fig:scenario1},~\ref{fig:scenario2},~\ref{fig:scenario3},~\ref{fig:scenario4}. We will show the pre-computation overhead and running performance. In the running stage, the overhead is divided into two parts, one for single-agent trajectory calculation and the other for multi-agent trajectory collision avoidance. The algorithm was introduced in the previous section. We have implemented our algorithms in C++ on an Intel Core i7 CPU running at 3.30GHz with 16GB of RAM on Window 7. All the timings are generated on a single CPU core

In all four benchmarks, we use the secondary linear dynamic system agents. $40-100$ agents are introduced for each scenario, evenly distributed around the start sides of the bridges. In the 3D scenario, the radius of each agent is 3 units and its maximum velocity is 2 units/sec. The maximum acceleration is $1$ $unit^2$/sec. The size of the scenario is $150 \times 80 \times 80$. In 2D the radius of the agent is $5$ units. The maximum velocity of an agent is $3$ units/sec. The maximum acceleration is $2$ $units^2$/sec. The scenario size is $750 \times 480$ units.

We highlight our performance in several substeps of our algorithm. The results are shown in Tables~\ref{tab:BuildCorridor}, ~\ref{tab:TrajectoryBuilding1}, ~\ref{tab:comparison}, and,~\ref{tab:TrajectoryBuilding2} for different benchmarks. We first analyze our online performance. In our result, the overhead for computing the global trajectory for each agents is very fast, especially when the agents are inside the bridge. In all the benchmarks, this substep of interpolating the trajectory in the bridge takes less than $1$ ms. When the agents are outside the bridge, the agents can still easily find the entrance of the bridge by using the optimal trajectory equation~\cite{webb2013kinodynamic}. This entrance-finding procedure introduces an overhead of less than $2$ seconds for the entire simulation.

Next, we highlight the performance of our pre-computation stage. In this case, the performance depends on the implementation of the RRT algorithm. In all the experiments, these precomputation overheads are in a reasonable range. Table ~\ref{tab:BuildCorridor} shows the performance of our algorithm. Here the left column shows the average length of agents’ trajectories, i.e. how many nodes of trajectories on average. Each right column shows the number of frames. Our algorithms generate very dense motion plans. All of the agents can finish their jobs in a reasonable time. In the worst case, for 40-100 agents, our algorithm can compute global collision-free paths in 30 seconds.
In order to highlight the advantages over local navigation (see Table~\ref{tab:BuildCorridor}), our method is compared with prior local navigation methods: RVO~\cite{Jur:2011:RVO} in 3D + PRM, HRVO~\cite{Snape:2014:SCN} in 2D + PRM. We describe the four scenarios and analyze the relative performance on these benchmarks.

\paragraph{Benchmark 1} This is a challenging scenario where 40 agents need to pass through two holes simultaneously and arrive at their respective goals, as shown in Figure~\ref{fig:scenario1}(a). The holes are small and only allow two agents to pass through at the same time. However, the areas around the agents' start and goal positions are widely open, and thus the agents can easily find a path to enter the bridges, as shown by the small outside bridge cost in Table~\ref{tab:BuildCorridor}. Compared to RVO algorithms, our method can effectively compute collision-free global paths, while the RVO local navigation methods get stuck in the narrow passages and can take a long time to make all agents reach the goals. In addition, our method results in fewer collision cases as compared to RVO and RVO+PRM, as shown in Table~\ref{tab:comparison}.

\paragraph{Benchmark 2:}
This is a 2D benchmark with a long narrow corridor where 96 agents are trying to move from one end to the other end, as shown in Figure~\ref{fig:scenario2}(b). The corridor only allows one agent to pass through at a time. For this scene, we compute two bridge connecting the two open spaces in the scenario. The RVO method can compute the collision-free trajectories for agents, but they tend to get stuck for a while due to the narrow corridor. The RVO+PRM method also results a large number of collisions. Our method generates a more stable and faster simulation than RVO and RVO+PRM, as shown in Figure~\ref{tab:comparison}.

\paragraph{Benchmark 3:}
This scenario has multiple narrow passages as shown in Figure~\ref{fig:scenario3}(c), and we construct two bridges to help the global navigation through these narrow passages. In this scenario, the RVO method fails to compute collision-free paths because agents are getting stuck in the narrow passages. The RVO+PRM method can find a feasible solution, but it takes significantly longer time as compared to our method and also results in more collisions, as demonstrated in Table~\ref{tab:comparison}.

\paragraph{Benchmark 4:}
In this benchmark, we have circle obstacle with narrow space in the scene.
Our method builds two bridges to connect the left and right areas in the workspace. The simulation result is shown in Figure~\ref{fig:scenario4}(d). As compared to local methods, our approach results in fewer agent-agent collisions and can compute the final trajectories faster as shown in Table~\ref{tab:comparison}.

\section{Conclusion and Future Work}
We present a novel multi-agent global navigation algorithm using interpolation bridges. Our approach is general and overcomes some of the major limitations of prior methods in terms of navigating through crowded areas or narrow regions. We present new techniques to compute these bridges in 2D and 3D workspaces and use their properties to compute interpolating collision-free trajectories for the agents. The construction of our bridge enables collision-free multi-agent global navigation. We have demonstrated its performance on many complex 2D and 3D scenarios and can perform collision-free navigation for tens of agents in realtime.

Our approach has some limitations. The bridge computation is limited to static obstacles, or dynamic obstacles whose trajectories are known. The complexity of global navigation increases with the number of bridges in the workspace, and very complex scenarios can result in a high number of bridges. Furthermore, our current approach is limited to agents with linear dynamics.
There are many avenues for future work. In addition to overcoming these limitations, we would like to design improved algorithms for bridge computation and further evaluate their performance.

\bibliographystyle{splncs}
\bibliography{references}

\end{document}